
\documentclass[letterpaper]{article}
\usepackage{aaai}

\pdfoutput=1     

\usepackage{url}

\usepackage{times}
\usepackage{helvet}
\usepackage{courier}
\frenchspacing

\addtolength\titlebox{0.3in}
%
 %

\usepackage{amssymb}
\usepackage{amsmath, amsthm}
\usepackage{color}
\usepackage{graphicx}
\usepackage{verbatim}
\usepackage{tabularx}
\usepackage{amsfonts,amsmath,amsthm,array,amssymb}
\usepackage{graphicx}      
\usepackage{times}
\usepackage{epstopdf}
\usepackage{url}
\usepackage{algorithm, algorithmicx, algpseudocode}
\usepackage{subfig}

\frenchspacing

\usepackage{multirow}


\newtheorem{theorem}{Theorem}

\newtheorem{proposition}{Proposition}

\setlength{\pdfpagewidth}{8.5in}
\setlength{\pdfpageheight}{11in}
\pdfinfo{
/Title (Pattern Decomposition with Complex Combinatorial Constraints: Application to Materials Discovery)
/Author (Stefano Ermon, Ronan Le Bras, Santosh K. Suram, John M. Gregoire, Carla Gomes, Bart Selman, Robert B. van Dover)}
\setcounter{secnumdepth}{0}  
 \begin{document}
%
\title{Pattern Decomposition with Complex Combinatorial Constraints:\\ Application to Materials Discovery}
\author{
Stefano Ermon \\  Computer Science Department 
\\ Stanford University \\ ermon@cs.stanford.edu
\And
Ronan Le Bras \\ Department of  Computer Science 
\\ Cornell University \\ lebras@cs.cornell.edu
\And
Santosh K. Suram, John M. Gregoire \\  Joint Center for Artificial Photosynthesis
\\ California Institute of Technology
\\ \{sksuram, gregoire\}@caltech.edu 
\AND
Carla P. Gomes, Bart Selman \\  Department of Computer Science 
\\ Cornell University \\ \{gomes,selman\}@cs.cornell.edu  
\And
Robert B. van Dover \\  Department of Materials Science and Engineering
\\ Cornell University \\ rbv2@cornell.edu
}
\maketitle
\begin{abstract}
\begin{quote}
Identifying important components or factors in large amounts of noisy data is a key problem in machine learning and data mining. Motivated by a pattern decomposition problem in materials discovery, aimed at discovering new materials for renewable energy, e.g. for fuel and solar cells, we introduce CombiFD, a framework for factor based pattern decomposition that allows the incorporation of a-priori knowledge as constraints, including complex combinatorial constraints.  In addition, we propose a new pattern decomposition algorithm, called AMIQO, based on solving a sequence of (mixed-integer) quadratic programs. Our approach considerably outperforms the state of the art on the materials discovery problem, scaling to larger datasets and recovering more precise and physically meaningful decompositions. We also show the effectiveness of our approach for enforcing background knowledge on other application domains. 
\end{quote}
\end{abstract}

\section{Introduction}
\label{introduction}
In recent years, we have seen an enormous growth in data generation rates in many fields
of science~\cite{halevy2009unreasonable}. For instance, in combinatorial materials discovery, scientists search for new materials with desirable properties by obtaining measurements on hundreds of samples in a single batch
experiment using a composition spread technique~\cite{ginley,narasimhan2007combinatorial}. 
Providing computational tools for automatically analyzing and for determining the structure of the materials formed in a composition spread 
is an important and exciting direction in the emerging field of Computational Sustainability~\cite{gomes2009computational}. 
For example, this approach has been successfully applied to speed up the discovery of new materials with improved catalytic activity for fuel cell applications~\cite{gregoireptta,van1998discovery} and of oxygen evolution catalysts with applications to solar fuel generation \cite{Haber2014Discovering}. Long-term solutions to several sustainability issues in energy and transportation are likely to come from break-through innovations in materials~\cite{white2012materials}, and computer science can play a role 
and provide support 
to accelerate new materials discovery~\cite{lebras2011constraint}.

To accelerate the discovery of new materials, materials scientists have developed high-throughput experimental procedures to create composition-spread libraries of new materials, a process that can be intuitively understood as generating an enormous number of compounds in a single experiment by mixing different amounts of a small number of basic elements~\cite{takeuchi2002combinatorial}. The promising libraries are then characterized using X-ray diffraction 
to determine the underlying crystal structure and composition. Specifically, a set of X-ray diffraction signals are sampled at $n$ different material compositions, each one corresponding to a different mixture of some basic elements.
A key problem in materials discovery is called the \emph{phase map identification problem}, defined as finding $k \ll n$ basic phases (basis functions that change gradually with composition, in terms of structure and intensity), such that all the $n$ X-ray measurements can be explained as a mixture of the $k$ basic phases.
The decomposition is subject to physical constraints that govern the underlying crystallographic process.

The \emph{phase map identification} is effectively a source separation or spectral unmixing problem~\cite{berry2007algorithms} where the sources are the $k$ non-negative, basic x-ray diffraction signals and each observation is a non-negative combination of these $k$ sources. 
Therefore, a standard approach from the literature is non-negative matrix factorization (NMF)~\cite{long09}. 
Nevertheless, this approach overlooks the physical constraints from the crystal formation. For example, it does not guarantee connectivity of the ``phase regions'' in the phase map, nor can it handle basis patterns that are slightly changing (``shifting'') as the crystal lattice constants change. Recent development \cite{kusne2014fly}, while not enforcing these constraints \emph{per se}, have shown to be resilient to peak shifting for example.
To obtain physically meaningful decompositions, researchers~\cite{lebras2011constraint,ermon2012smt} have looked at constraint programming formulations that can incorporate all the necessary constraints. The down side of these approaches is that they are fully discrete (they require a discretization of the data through peak detection) and they cannot directly deal with continuous measurement data. On the other side of the spectrum with respect to NMF, the unsupervised nature is lost, and scalability becomes an issue as, for example, in a fully discrete problem there is no notion of gradient anymore. In addition, these approaches are not robust to the presence of noise in the data, as noise considerably impacts the efficiency of their filtering and propagation mechanisms.

In this work, we aim to achieve the best of both worlds by bridging the previous approaches and providing a new hybrid formulation, where we integrate additional domain knowledge as additional constraints into the basic NMF approach. 
We introduce CombiFD, a novel pattern decomposition framework that allows the specification of very general constraints. These include constraints used with some matrix factorization and clustering approaches (such as non-negativity or partial labeling information) as well as more general ones that require a richer representation language, powerful enough to capture more complex, combinatorial dependencies. For example, we show 
how to encode complex a-priori scientific domain knowledge by specifying combinatorial dependencies on the variables imposed by physical laws. We also propose a novel solution technique called AMIQO (\textbf{A}lternating \textbf{M}ixed \textbf{I}nteger \textbf{Q}uadratic \textbf{O}ptimization), that involves the solution of a sequence of (mixed-integer) quadratic programs.
Overall, our constrained factorization algorithm clearly outperforms previous approaches: it scales to large real world datasets, and recovers physically meaningful and significantly more accurate interpretations of the data when prior knowledge is taken into account.

\section{Framework}
\label{main}
Given $n$ data points $a_i \in \mathbb{R}^{m}$, each one represented by an $m$-dimensional vector of real-valued features, we represent the input data compactly as a matrix $A \in \mathbb{R}^{m \times n}$, each column corresponding to a data point and each row to a feature. In the context of the \emph{phase map identification}, $A$ corresponds to the $n$ observed X-ray diffraction patterns, each of them as a vector of $m$ scattering intensity values.

We are interested in low-dimensional representations which can approximate the input data $A$ by identifying its essential components or factors. Namely, for a given number $k$ of basic phases, we want to approximate $A$ as $A \approx WH $, where $W\in \mathbb{R}^{m \times k}$ represents $k$ basic phases (or phase patterns) and $H\in \mathbb{R}^{k \times n}$ the combination coefficients at each data point. 

This problem belongs to the family of low-rank approximation problems, an important research theme in machine learning and data mining with numerous applications, including source separation, denoising, compression, and dimensionality reduction~\cite{berry2007algorithms}. 

\subsection{Low-Rank Approximation}
Given a non-negative matrix $A \in \mathbb{R}^{m \times n}$ and a desired rank $k < \min(n,m)$, we seek matrices $W\in \mathbb{R}^{m \times k} $ and $H\in \mathbb{R}^{k \times n}$, $H,W \geq 0$ that give the best low rank approximation of $A$, i.e.
$
A \approx WH 
$. Typically this is formulated as the following optimization problem:
\begin{equation}
\label{nmf_opt}
\min_{W,H} || A-WH||_2 
\end{equation}
where $W \in \mathbb{R}^{m \times k}$ is a matrix of \emph{basis vectors} or \emph{patterns} and $H \in \mathbb{R}^{k \times n}$ the \emph{coefficient} matrix. The symbol $|| \cdot||_2$ indicates (entry-wise) Frobenius norm. 

This basic problem can be solved by the so-called truncated Singular Value Decomposition (SVD) approach, which produces the best approximation in terms of squared distance. It can be computed efficiently and robustly, obtaining a representation where data points can be interpreted as linear combinations of a small number of basis vectors. 

In many applications input data are non-negative, for instance representing color intensities, word counts ~\cite{tjioe2008using} or x-ray scattering intensities in our motivating application. The basis vectors computed with an SVD, however, are generally not guaranteed to be non-negative, and this leads to an undesirable \emph{lack of interpretability}. For example, it is not possible to interpret an image as the superposition of simpler patches, or 
an X-ray diffraction pattern as the composition of several basic compounds 
when obtaining negative values for some of the basis vectors or the coefficients. To overcome this limitation, researchers have introduced the NMF approach, which explicitly enforces non-negativity constraints on the basis vectors and coefficients. 

While non-negativity is a very common constraint in many domains, in some applications we have additional valuable \emph{a priori} information on the features (each feature corresponds to a row of $A$ and $W$). For instance, we might also know \emph{a priori} an upper bound on the value of some features, or that two Boolean features are incompatible, or that non-negativity only holds for a subset of the features. In particular, in our materials science domain basis, vectors (patterns) correspond to chemical compounds and their underlying crystal structures. For chemical systems in thermodynamic equilibrium, the compositional variation of the concentration and lattice parameters of each compound follow well-defined rules, from which constraints on the coexistence and variation of basis patterns can be defined~\cite{lebras2011constraint,ermon2012smt}. 
While some constraints such as non-negativity can be individually enforced in some approaches, others such as connectivity and the complex rules defining shifting basis patterns (see below for details) have not been considered before. To the best of our knowledge, there is no general factor analysis framework that can handle the \emph{combination} of all these constraints. 
This motivates the definition of the following general pattern decomposition subject to combinatorial constraints problem.
\subsection{CombiFD: Pattern Decomposition with Combinatorial Constraints}
Given a (general) matrix $A \in \mathbb{R}^{m \times n}$ and a desired rank $k < \min(n,m)$ we seek matrices $W\in \mathbb{R}^{m \times k} $ and $H\in \mathbb{R}^{k \times n}$ that minimize $||A-WH||_{p}$ where $p \in \{1,2\}$ and $||\cdot||_p$ is an entry-wise norm (e.g. $p=2$ corresponds to the Frobenius norm).
Moreover, the factors need to satisfy an additional set of $J$ linear inequality constraints, possibly requiring binary or integer variables. This is formalized as the following optimization problem:
\begin{equation}
\label{cnmf_def}
\begin{aligned}
& \underset{W,H,x,b}{\text{minimize}}
& & || A-WH||_{p} = f(W,H) \\
& \text{subject to}
& &  C [\text{vec}(W),\text{vec}(H),x,b]^T\leq d \\
&&&  b_i \in \{0,1\}, \; i \in [1,N].
\end{aligned}
\end{equation}
where $x\in \mathbb{R}^M$ is a vector of additional real-valued variables, $b$ is a vector of $N$ binary variables, $d \in \mathbb{R}^J$, $C\in \mathbb{R}^{J \times (mk + nk + M +N)}$  and $\text{vec}(\cdot)$ denotes vectorization (stacking a matrix into a vector). That is, we have $J$ linear inequalities involving the entries of $W$, $H$, $x$ and $b$, with coefficients given by $C$ and right-hand side given by $d$. As Integer Linear Programming is well known to be NP-complete, very general constraints can be encoded by appropriately choosing $C$ and $d$.
These additional (combinatorial) constrains are extremely useful to encode prior knowledge we might have about the domain. These include non-negativity ($W,H \geq 0$), upper bounds ($W_{i,j}\leq u$), sparsity (see Example 1), semi-supervised clustering (see Example 2) as well as many others. Other examples of intricate constraints can be found in the experimental section below.

\textbf{Example 1: $L_0$ sparsity}
Suppose we want to explicitly formulate a sparsity constraint on the coefficient matrix. That is, we want to find $W,H$ such that $A \approx WH$ and each column of $H$ has at most $S$ non-zero entries, i.e. $||H_i||_0 \leq S$ where $H_i$ is the $i$-th column of $H$. 
For instance, in a topic modeling application where the basis vectors correspond to topics, this constraint ensures that each document can have at most $S$ topics.
This can be encoded as follows:
\begin{equation}
\label{cnmf_sparse_def}
\begin{aligned}
&\underset{W,H,b}{\text{min}}
& &||A-WH||_{2} \\
&\text{s.t.}
&& b_{i,j} \geq h_{i,j}, \sum_j h_{i,j} = 1,  \sum_i b_{i,j} \leq S \\
&&& b_{i,j} \in \{0,1\} \; i \in [1,k], j \in [1,n]
\end{aligned}
\end{equation}
which can be easily rewritten more compactly in the form (\ref{cnmf_def}) by selecting appropriate $C$ and $d$ (in this case, $M=0$ and $N=kn$).

\textbf{Example 2: Semi-Supervised Clustering}
As an example, in a semi-supervised clustering problem, we can easily include partial labeling information as constraints on $H$, e.g. enforcing Must-Link or Cannot-Link constraints~\cite{liu2010non,basu2008constrained,choo2013utopian}. Suppose we have prior information on $P_{ML}$ pairs of data points $ML=\{(i_1,j_1), \cdots, (i_{P_{ML}},j_{P_{ML}})\}$ that are known to belong to the same cluster (Must-Link)
, and $p_{CL}$ pairs of data points $CL=\{(i_1,j_1), \cdots, (i_{p_{CL}},j_{p_{CL}})\}$ that are known not to belong to the same cluster (Cannot-Link).
We encode this prior knowledge into our CombiFD framework as follows:
\begin{equation}
\label{cnmf_semi_supervised}
\begin{aligned}
&\underset{W,H,b}{\text{min}}
& &||A-WH||_{2} \\
&\text{s.t.}
&& W,H \geq 0 , \sum\nolimits_j h_{i,j} = 1, \sum\nolimits_i b_{i,j} \leq S \\
&&& b_{i,j} \geq h_{i,j},\; b_{i,j} \in \{0,1\} \; i \in [1,k], j \in [1,n] \\
&&& b_{i,i_s} = b_{i,j_s} \; i \in [1,k], (i_s,j_s) \in ML \\
&&& b_{i,i_s} + b_{i,j_s} \leq 1 \; i \in [1,k], (i_s,j_s) \in CL
\end{aligned}
\end{equation}

\subsection{Related work}

Many low rank approximation schemes are available, including QR decomposition, Independent Component Analysis, truncated Singular Value Decomposition, and Non-negative Matrix Factorization~\cite{berry2007algorithms}. 

While these basic methods are unsupervised, there is a growing interest in incorporating prior knowledge or user guidance into these frameworks~\cite{zhi2013clustering}. 
For example, in semi-supervised clustering applications, user guidance is often given by partial labeling information, which can be incorporated using hard constraints ~\cite{liu2010non,basu2008constrained,choo2013utopian}. Typical constraints used in this case are Must-Link and Cannot-Link, enforcing that two data points must or cannot be in the same cluster, respectively. For example,~\cite{hossain2010unifying} present an integrated framework for clustering non-homogenous data, and show how to turn Must-Link and Cannot-Link constraints into dependent and disparate clustering problems. Recently, researchers have also considered interactive matrix factorization schemes for topic modeling that can take into account user feedback on the quality of the decomposition~\cite{choo2013utopian} (topic refinement, merging or splitting)  and semi-supervised NMF with label information as constraints~\cite{liu2010non}.  Constraint clustering~\cite{basu2008constrained} is another example of this approach. Alternatively, regularizations or penalty terms are also used to obtain solutions with certain desired properties such as sparsity~\cite{hoyer2004non,cai2011graph},  
convexity~\cite{ding2010convex}, temporal structural properties and shift invariances~\cite{smaragdis2004non,smaragdis2008sparse}.

Most of the work in the literature is however confined to a single type of constraints or simple conjunctions, which limits their usability. With CombiFD, we propose a new approach for finding a low dimensional approximation of some data (decomposition into basic patterns) that is able to incorporate not only existing types of constraints but also more complex logical combinations.

\section{Constrained Factorization Algorithm}
\label{algorithm}
Solving the general CombiFD optimization problem (\ref{cnmf_def}) is challenging for two reasons. First, the objective function is not convex, hence minimization is difficult even in the presence of simple non-negativity constraints~\cite{berry2007algorithms}. Second, we are allowing a very expressive class of constraints, which can potentially specify very complex, intricate dependencies among the variables. Unfortunately, general nonconvex mixed-integer non-linear programming has not seen as much progress as their linear (MILP) and quadratic (MIQP) counterparts, and most approaches are either application specific or do not scale well. Even in the presence of simple constraints (as it is the case for NMF), the problem is rarely solved to optimality and in practice heuristic approaches are used. Yet, simple heuristic approaches such as multiplicative update rules~\cite{lee1999learning}, which is one of the most widely used algorithms, do not apply to our case due to the integer variables. Similarly, projected gradient techniques cannot be directly applied here~\cite{lin2007projected}.
We thereofore introduce a new approximate technique called AMIQO which exploits the special structure of the problem and takes full advantage of advanced MIQP optimization techniques from the OR literature. 

To solve the general CombiFD optimization problem (\ref{cnmf_def}), we introduce AMIQO (\textbf{A}lternating \textbf{M}ixed \textbf{I}nteger \textbf{Q}uadratic \textbf{O}ptimization) with pseudocode reported as Algorithm \ref{algo}. AMIQO is an iterative two-block coordinate descent procedure enhanced with sophisticated combinatorial optimization techniques beyond the standard convex optimization methods. In fact, the key advantage of our CombiFD framework is that for either $H$ or $W$ fixed, problem (\ref{cnmf_def}) is a mixed-integer quadratic program.\footnote{For $p=1$, it simplifies to a mixed-integer linear program.} Mixed-integer quadratic programs have been widely studied in the operations research literature, and we can leverage a wide range of techniques that have been developed and are implemented in state-of-the-art mixed-integer quadratic programming (MIQP) solvers such as IBM CPLEX. 
These integer programs do not have to be solved to optimality, and it is sufficient to improve the objective function with respect to the factorization found at the previous step (which is used to warm-start the search).
Notice that when there are no binary variables ($N=0$), the optimization problems in the inner loop of AMIQO correspond to standard quadratic programs that can be solved efficiently, even in the presence of (linear) constraints in the form (\ref{cnmf_def}), which are more general than non-negativity.
AMIQO is inspired by the seminal work of Paatero and Tapper who initially proposed the use of a block coordinate descent procedure for NMF~\cite{paatero1994positive}, and was later followed upon by  a number of researchers, including an unconstrained least squares version~\cite{berry2007algorithms}, and solution techniques based on projected gradient descent~\cite{lin2007projected}, Quasi-Newton~\cite{kim2007fast}, and Active-set~\cite{kim2008nonnegative}. However, our approach is novel in that it uses a mixed integer solver in each coordinate descent step, and is the only one that can take into account combinatorial constraints, guaranteeing feasibility of the solution at every iteration even in the presence of intricate combinatorial constraints.

\begin{algorithm}
\begin{algorithmic}
\State Find feasible $W^0,H^0,x^0,b^0$ for (\ref{cnmf_def}) \Comment{Use MIP solver}
\For {$j=0, \cdots, t-1$}
\State $W^{j+1},\tilde{x}^{j+1},\tilde{b}^{j+1} \leftarrow \arg \min_{W,x,b} f(W,H^{j})$ s.t. (\ref{cnmf_def}) and $H=H^j$ \Comment{Use MIQP solver}
\State $H^{j+1},x^{j+1},b^{j+1} \leftarrow \arg \min_{H,x,b} f(W^{j+1},H)$ s.t. (\ref{cnmf_def}) and $W=W^{j+1}$ \Comment{Use MIQP solver}
\EndFor
\State \Return $W^t,H^t$
\end{algorithmic}
\caption{AMIQO}
\label{algo}
\end{algorithm}
We summarize the properties of AMIQO with the following proposition:
\begin{proposition}\label{thProperties}
Let $W^j,H^j$ be as in Algorithm \ref{algo}. If (\ref{cnmf_def}) is feasible, the following two properties hold:
	1) For all $j$, $0 \leq j \leq t$, the optimization problems in the inner loop of the algorithm are feasible and $(W^j,H^j,x^j,b^j)$ is feasible for (\ref{cnmf_def}). 
	2) The objective function $|| A-W^jH^j||_{p}$ is monotonically non-increasing, i.e. $|| A-W^jH^j||_{p} \geq || A-W^{j+1}H^{j+1}||_{p}$
\end{proposition}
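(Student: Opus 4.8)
The plan is to prove both properties simultaneously by induction on $j$, since feasibility at each step is exactly what guarantees that the MIQP subproblems are well-posed and that the argmin operations return valid iterates. The two properties are tightly coupled: property (1) is the invariant that keeps the algorithm running, and property (2) is a consequence of the fact that each subproblem is solved over a feasible region that contains the previous iterate.

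First I would establish the base case. By hypothesis (\ref{cnmf_def}) is feasible, so the initialization step (which uses a MIP solver to find $W^0,H^0,x^0,b^0$) succeeds and returns a point satisfying all the linear inequalities $C[\text{vec}(W),\text{vec}(H),x,b]^T \leq d$ together with the integrality constraints $b_i \in \{0,1\}$. Hence $(W^0,H^0,x^0,b^0)$ is feasible for (\ref{cnmf_def}), establishing property (1) at $j=0$.

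Next comes the inductive step, which I expect to be the crux of the argument. Assume $(W^j,H^j,x^j,b^j)$ is feasible for (\ref{cnmf_def}). The key observation is that fixing $H=H^j$ turns (\ref{cnmf_def}) into a \emph{restricted} optimization problem over $(W,x,b)$ whose feasible region is nonempty: indeed $(W^j,x^j,b^j)$ itself satisfies all the constraints with $H$ set to $H^j$, since the original point was feasible. Therefore the first MIQP subproblem is feasible, the $\arg\min$ is over a nonempty set, and its minimizer $(W^{j+1},\tilde{x}^{j+1},\tilde{b}^{j+1})$ is feasible. Crucially, because $(W^j,x^j,b^j)$ lies in the feasible region of this subproblem, optimality gives $f(W^{j+1},H^j) \leq f(W^j,H^j)$. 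I would then repeat the identical reasoning for the second subproblem: fixing $W=W^{j+1}$ leaves a nonempty feasible region (witnessed by $(\tilde{x}^{j+1},\tilde{b}^{j+1})$ together with $H^j$), so $(H^{j+1},x^{j+1},b^{j+1})$ is feasible and $f(W^{j+1},H^{j+1}) \leq f(W^{j+1},H^j)$. Combining the two inequalities yields $f(W^{j+1},H^{j+1}) \leq f(W^j,H^j)$, which is precisely property (2), and the feasibility of $(W^{j+1},H^{j+1},x^{j+1},b^{j+1})$ completes the induction for property (1).

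The main obstacle, and the point I would take care to state explicitly, is the claim that each subproblem's feasible region contains the previous iterate — this is what simultaneously guarantees feasibility of the argmin (nonemptiness) and the monotone descent (the previous objective value is achievable, so the new optimum cannot be worse). A subtle but important caveat is that the descent argument requires the subproblems to actually be solved to optimality, or at least to return a point no worse than the warm-start; the excerpt notes that in practice the integer programs need only improve on the previous factorization, so I would phrase property (2) under the assumption that each step returns a point whose objective is at most that of the warm-start, which holds trivially for the exact $\arg\min$ written in Algorithm \ref{algo}. No genuine difficulty arises from the integer variables themselves, since the monotonicity argument never differentiates $f$ — it relies purely on the set-inclusion and optimality structure, which is unaffected by integrality.
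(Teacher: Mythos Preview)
Your proposal is correct and matches the paper's own proof essentially line for line: both argue by induction on $j$, use the previous iterate as a feasibility witness for each restricted subproblem, and chain the two optimality inequalities $f(W^j,H^j)\geq f(W^{j+1},H^j)\geq f(W^{j+1},H^{j+1})$ to obtain monotonicity. Your additional remarks on the warm-start caveat and the irrelevance of integrality to the descent argument are accurate elaborations but not part of the paper's (terser) proof.
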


\begin{theorem}\label{thKmeanEquiv}
AMIQO run on CombiFD problem (\ref{cnmf_semi_supervised}) with $S=1, ML=CL=\emptyset$ is equivalent to $k$-means.
\end{theorem}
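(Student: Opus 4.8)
The plan is to show that, once we substitute $S=1$ and $ML=CL=\emptyset$, the feasible region of (\ref{cnmf_semi_supervised}) forces $H$ to be a \emph{hard} cluster-assignment matrix, and that the two minimizations in the inner loop of AMIQO then coincide, line for line, with the assignment step and the centroid-update step of Lloyd's algorithm for $k$-means. Throughout let $a_j$ denote the $j$-th column of $A$ (the $j$-th data point) and $w_i$ the $i$-th column of $W$ (a candidate centroid). Since minimizing the Frobenius objective is equivalent to minimizing its square, I would work with $\|A-WH\|_2^2=\sum_{j=1}^n \|a_j - W H_{\cdot j}\|_2^2$, where $H_{\cdot j}$ is the $j$-th column of $H$.

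First I would pin down the structure of any feasible $H$. For each column $j$ the constraints read $h_{i,j}\ge 0$, $\sum_i h_{i,j}=1$, $h_{i,j}\le b_{i,j}$, $b_{i,j}\in\{0,1\}$, and $\sum_i b_{i,j}\le S=1$. Integrality together with $\sum_i b_{i,j}\le 1$ forces at most one $b_{i,j}$ to equal $1$ in column $j$; since $0\le h_{i,j}\le b_{i,j}$, column $j$ of $H$ then has at most one nonzero entry, and the normalization $\sum_i h_{i,j}=1$ makes that entry equal to $1$. Hence every feasible $H$ has standard-basis columns, i.e. it encodes a hard assignment $c:\{1,\dots,n\}\to\{1,\dots,k\}$ with $H_{\cdot j}=e_{c(j)}$, so that $W H_{\cdot j}=w_{c(j)}$ and the objective equals the $k$-means cost $\sum_j \|a_j-w_{c(j)}\|_2^2$. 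I would also observe that once $H$ is fixed the auxiliary variables $b$ are completely determined ($b=H$) and $x$ is absent ($M=0$), so the two inner MIQPs are genuinely optimizations over $W$ alone, respectively over the assignment $c$ alone.

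Next I would match each AMIQO step to a Lloyd step. With $H$ (equivalently $c$) fixed, the $W$-update minimizes $\sum_{i=1}^k\sum_{j:\,c(j)=i}\|a_j-w_i\|_2^2$, which decouples across $i$; the minimizer of each inner sum is the mean $w_i=\frac{1}{|C_i|}\sum_{j\in C_i}a_j$ of the points currently assigned to cluster $i$, precisely the centroid-recomputation step. With $W$ fixed, the $H$-update minimizes, separately for each $j$, $\|a_j-w_i\|_2^2$ over the finitely many admissible one-hot choices, which selects $c(j)=\arg\min_i\|a_j-w_i\|_2^2$, precisely the nearest-centroid assignment step. Because AMIQO warm-starts and the two steps reproduce Lloyd's two steps under a common tie-breaking rule, the iterate sequences coincide and the two procedures are equivalent.

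The point to treat with care is the non-negativity constraint $W\ge 0$, which standard $k$-means does not impose: I would note that since $A\ge 0$ the mean of any subset of columns of $A$ is already non-negative, so $w_i\ge 0$ is inactive at the $W$-optimizer and the centroid step returns the true unconstrained mean. The remaining ingredients, namely the decoupling of the two quadratics and the closed form of the per-cluster least-squares minimizer, are routine. The one genuinely load-bearing step is the integrality argument that collapses the relaxed coefficients $H$ onto exact one-hot assignments, since this is exactly what converts the continuous MIQP sub-problems into the discrete assignment-and-averaging operations of Lloyd's algorithm.
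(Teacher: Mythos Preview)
Your proposal is correct and follows essentially the same route as the paper's proof: both identify that the constraints force $H$ to be a one-hot assignment matrix, then match the $W$-update to the centroid-recomputation step and the $H$-update to the nearest-centroid assignment step, with the same observation that $A\ge 0$ renders the $W\ge 0$ constraint inactive at the optimizer. Your treatment is somewhat more explicit---in particular, you spell out the integrality argument that collapses $H$ onto one-hot columns and you note that the auxiliary $b$ is uniquely determined once $H$ is fixed---whereas the paper's proof states these points more tersely, but the underlying argument is the same.
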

\begin{proof}
See Appendix for both proofs.
\end{proof}

Although the objective function is monotonically non-increasing, AMIQO is not guaranteed to converge to a global minimum. This is consistent with the hardness of problem (\ref{cnmf_def}). More specifically, the quality of the final solution found might depend on the initialization of $W^0$ and $H^0$. This issue is common to other standard matrix factorization algorithms, and several heuristic initialization schemes have been proposed to mitigate the issue~\cite{albright2006algorithms}. 
Nevertheless, in our experimental evaluation, the initialization did not play a major role, and we typically converged to the same solution, regardless of the initial conditions.

\section{Experiments -- Encoding domain knowledge as additional constraints}
\label{experiments}
In order to show the generality of our approach, we provide experimental results on three application domains, with increasingly more complex constraints capturing a-priori domain knowledge. We start with semi-supervised clustering with partial labeling information, i.e. simple Must-Link and Cannot-Link constraints. We then consider another clustering problem, where we include more complex logical constraints on the features, describing higher-level biological knowledge. Finally we consider our motivating application: the phase map identification problem.

\subsection{Clustering with partial labeling information}
NMF has become a popular approach for clustering, with application domains ranging from document clustering~\cite{shahnaz2006document} and graph clustering~\cite{kuang2012symmetric} to gene expression data clustering~\cite{tjioe2008using}. Cluster membership is determined by the coefficient matrix $H$, which reflects how each data point decomposes into the basis vectors. 

There are several ways to obtain hard clustering assignments (binary indicators) from the coefficient matrix $H$ (real valued). We follow 
~\cite{ding2008equivalence}
and normalize the matrix $H$ so that the entries can be interpreted as the posterior probability $p(c_s|d_j)$ that a data point $j$ belongs to cluster $s$. 
Specifically, we let $D_W=diag(\mathbf{1}^TW)$ and estimate the cluster membership probability as $p(c_s|d_j) \propto [D_W H]_{sj}$. As a result, a data point $j$ is assigned a cluster $s^*(j)$ such that $s^*(j) = argmax_s [D_W H]_{sj}$.

We consider a semi-supervised clustering task where we assume to have some prior information on the labels (equivalently, on the cluster assignment) of a subset of datapoints. Specifically, we assume to have information about pairs of data points, which should either belong to the same cluster (Must-Link) or not (Cannot-Link). This information is obtained using standard labeled datasets from the UCI repository~\cite{BacheLichman2013} for which a ground truth clustering is known. To generate various amounts of prior knowledge, we randomly select $P$ pairs of data points, using their labels to specify a MustLink or CannotLink constraint. 

We compare our CombiFD formulation of the problem (\ref{cnmf_semi_supervised}) with two previous approaches from the literature: CNMF (Constrained NMF)~\cite{liu2010non} and NMFS (NMF-based semi supervised clustering)~\cite{li2007solving}. The first approach is based on enforcing non combinatorial constraints on $H$, while the second approach captures the ML and CL constraints using penalty terms in a modified objective function which is then approximately minimized using multiplicative updates.

We report in Figure \ref{MLCLAccuracy} the accuracy obtained using these methods as a function of the amount of supervision, i.e. the number of Must-Link or Cannot-Link constraints used. 
Accuracy is defined as in ~\cite{liu2010non} and corresponds to $AC=1/n \cdot max_{\sigma} \sum_{i=1}^{k}|r_i \cap c_{\sigma_i}|$, where $\sigma:1..k \rightarrow 1..k$ is a bijection mapping clusters $r_i$ to ground-truth classes $c_j$. Namely, each cluster is assigned a label such that the labeling best matches the ground truth labels. Note that the accuracy can be computed efficiently using, for example, the Hungarian algorithm.

Results are averaged over $100$ runs. We see that CombiFD significantly outperforms the competing techniques across all levels of prior knowledge. Intuitively, this is because by properly taking into account the combinatorial nature of the problem, CombiFD can automatically make logically sound inferences about the data: for example, we can take into account transitive closure (i.e., if $a$ and $b$ must link, and $b$ and $c$ must link, then $a$ and $c$ must link as well) and other logical implications. The deeper reasoning power and greater accuracy provided by AMIQO however involves a small computational overhead, with a typical runtime in the order of a couple of minutes for AMIQO versus a few tens of seconds for the competing techniques.
\begin{figure*}[tb]
\subfloat[Iris]{\label{grIris}\includegraphics[trim=0 0 00 0, clip, width=0.22\textwidth]{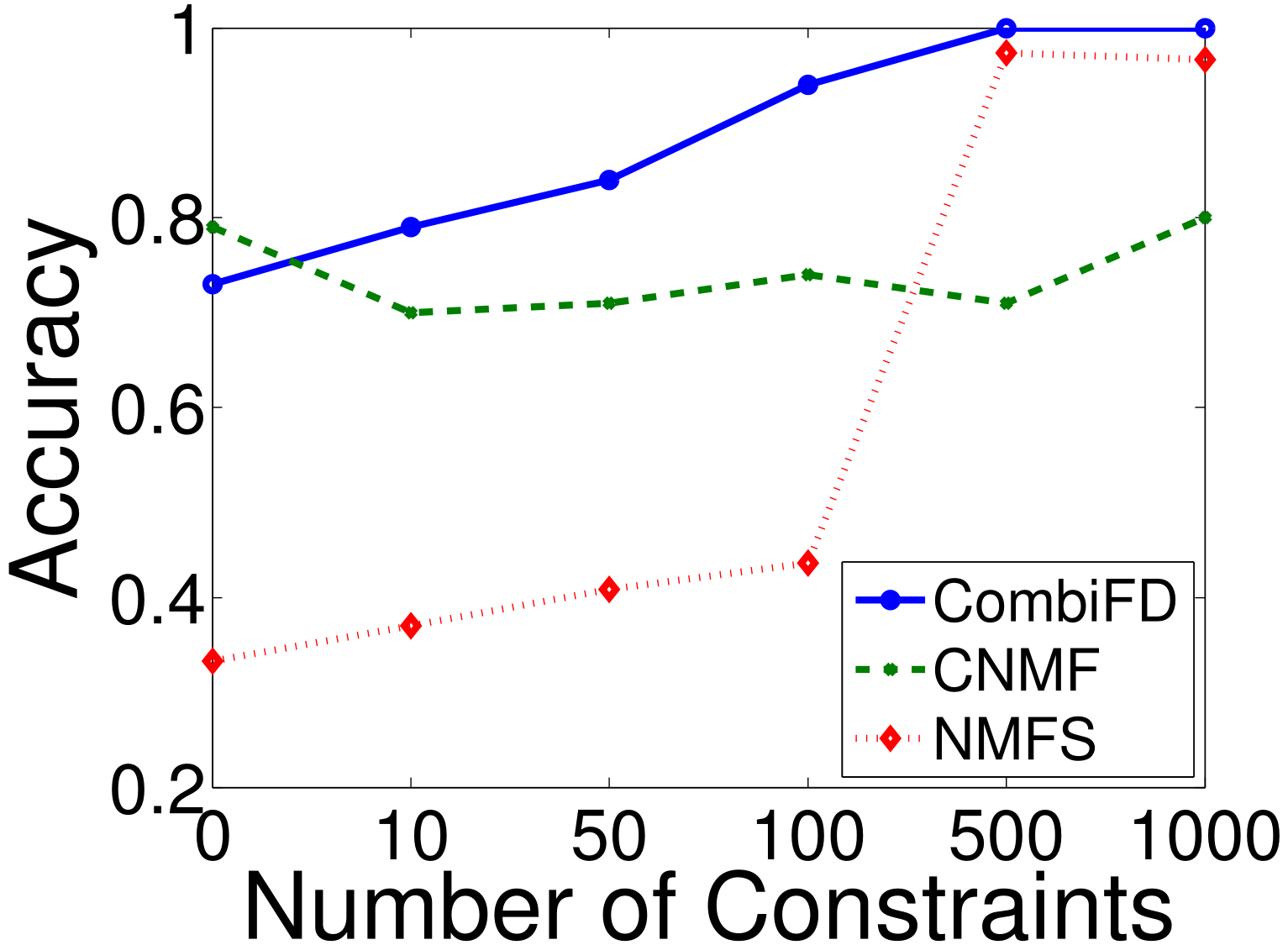}}
\hfill
    	\subfloat[Wine]{\label{grWine}\includegraphics[trim=0 0 00 0, clip, width=0.22\textwidth]{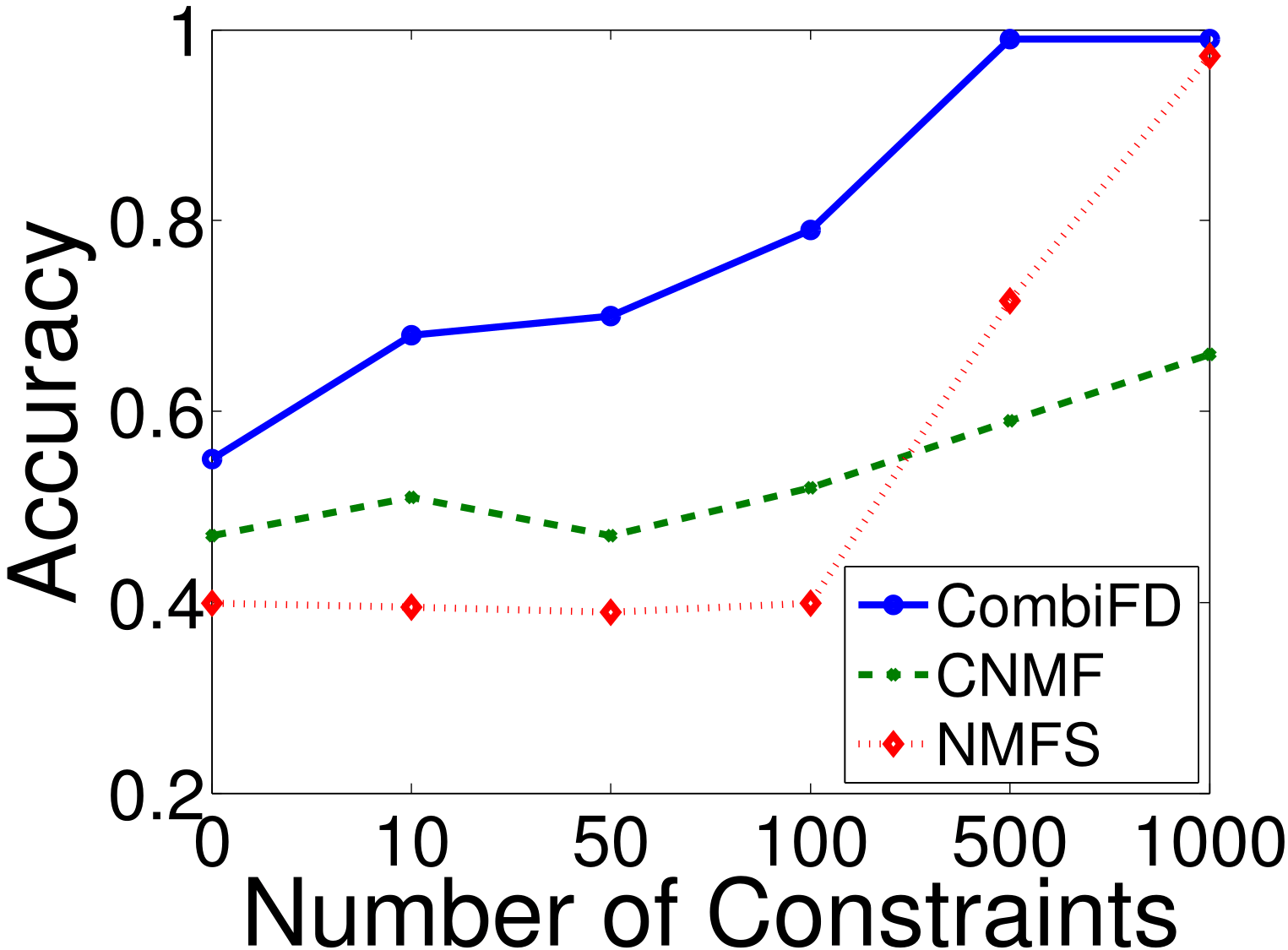}}
    	\hfill
	\subfloat[Ionosphere]{\label{grIono}\includegraphics[trim=0 0 00 0, clip, width=0.22\textwidth]{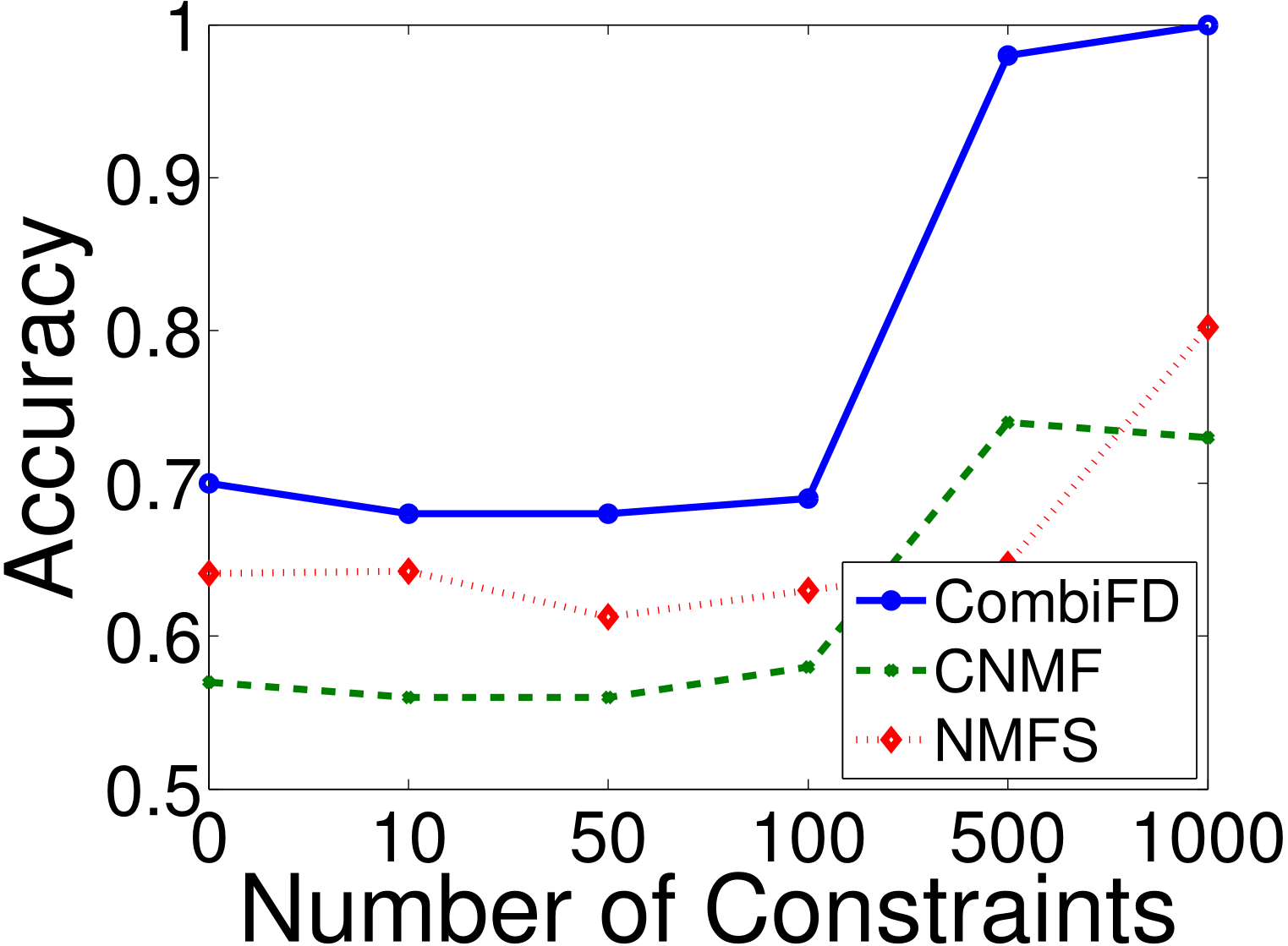}}
    	\hfill
   	\subfloat[Seeds]{\label{grSeeds}\includegraphics[trim=0 0 00 0, clip, width=0.22\textwidth]{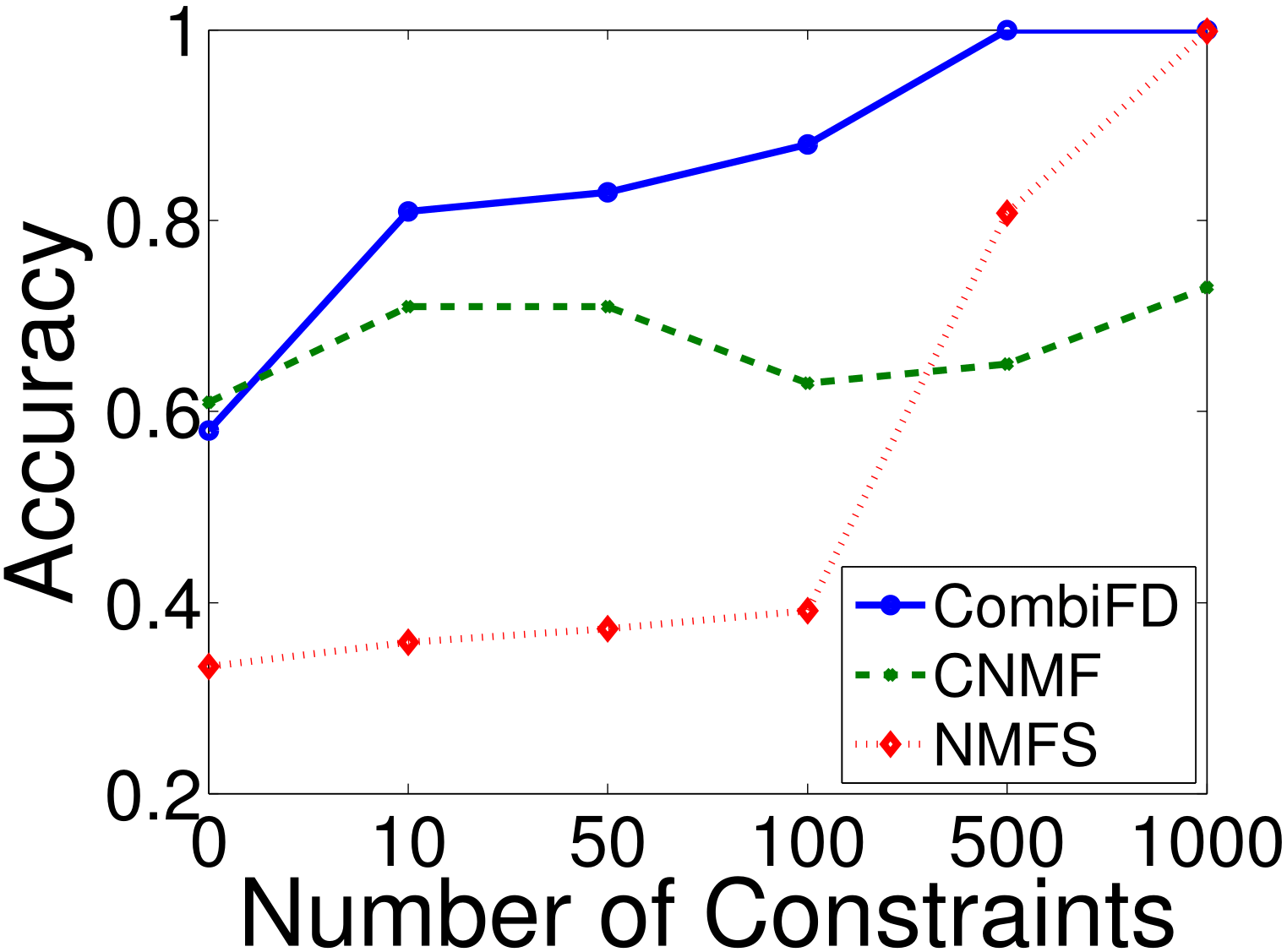}}
  	\caption{Accuracy as a function of the amount of prior knowledge. CombiFD (solid blue line) clearly outperforms competing techniques across all levels of supervision (number of constraints).}
     \label{MLCLAccuracy}
\end{figure*}

\subsection{Clustering with more complex prior knowledge}

In this experiment, we consider the Zoo dataset from UCI~\cite{BacheLichman2013}. This is a dataset of $101$ animals, each one represented as a vector of $17$ non-negative features (e.g. whether it has hair or not, whether it has feathers or not, or its number of legs). There are $7$ class labels.

We solve the clustering problem using our CombiFD approach (\ref{cnmf_def}), where we enforce non-negativity as well as additional constraints capturing some well known biological facts. Specifically, we enforce the following constraints on the basis vectors using our CombiFD framework: $\neg (HasMilk \wedge HasEggs)$, $HasFeathers \rightarrow HasEggs$, and $\neg (HasFeathers \wedge HasHair)$. Since we are not aware of any other matrix factorization technique that can take into account this kind of complex, logically structured prior knowledge, we compare with standard NMF (problem (\ref{nmf_opt})), which is a totally unsupervised technique.

\begin{small}
\begin{table}[ht]
\centering
\small
\begin{tabular}{ccccc}
			\hline
			\\[-1.5ex]
			\textbf{Approach} &  
			\multicolumn{2}{c}{
			\textbf{Accuracy}} &  
			\multicolumn{2}{c}{
			\textbf{Time (\emph{seconds})}} 
			\\[1pt]
			& Avg. & Std. Dev. & Avg. & Std. Dev.
			\\[1pt]\hline
			\\[-1.5ex]
			NMF & 0.72 & 0.08 & \textbf{1.84} & 0.1\\
			CombiFD & \textbf{0.82} & 0.05 & 4.06 & 0.8\\
			\hline
		\end{tabular}
\caption{Accuracy and runtime of NMF vs. CombiFD on the UCI Zoo dataset for $100$ runs.}
\label{tab:clustering}
\end{table}
\end{small}

Table \ref{tab:clustering} reports the accuracy (defined as before) and runtime of the two approaches averaged over $100$ runs.
The results show that the CombiFD approach, which incorporates a limited amount of logically structured prior knowledge, significantly improves the accuracy of standard NMF, while still running within seconds using AMIQO.

\subsection{Spectral Unmixing for Materials Discovery}

\begin{figure*}[t]
\centering
		\includegraphics[scale=0.20]{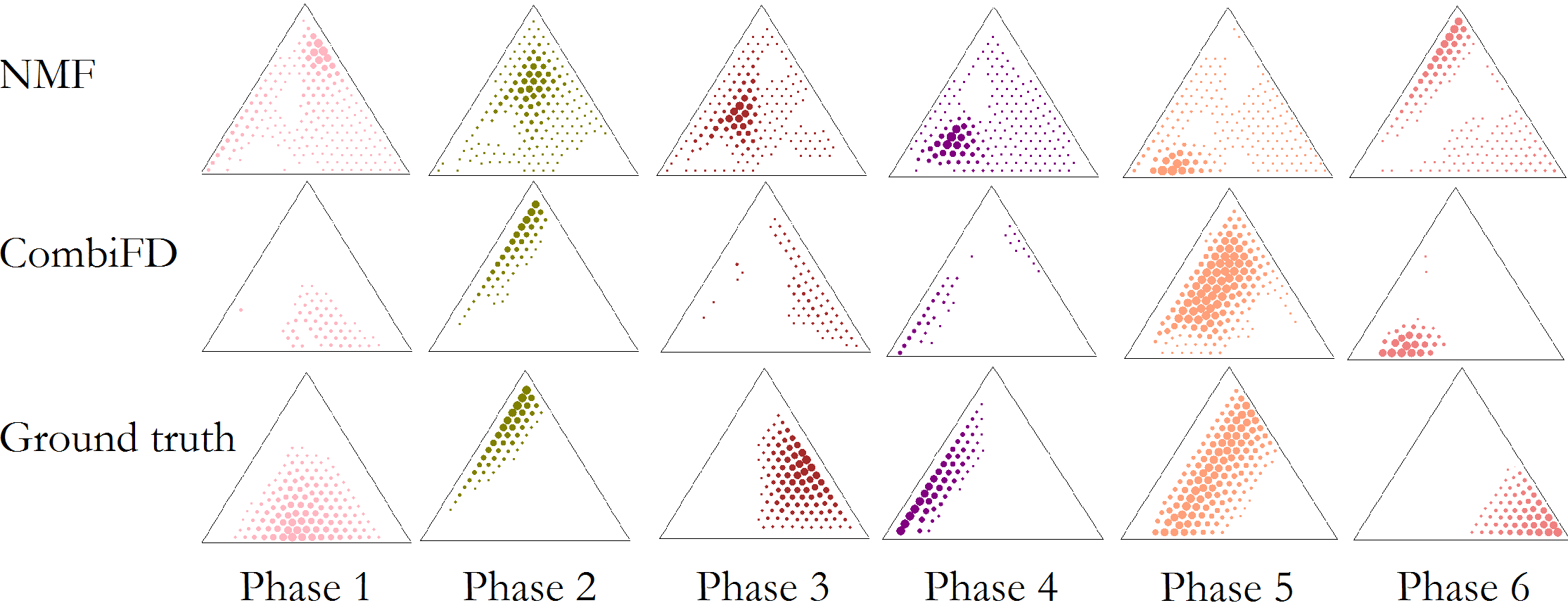}
	\caption{Results of NMF (top row), CombiFD (middle row), and ground truth (bottom row) for the Al-Li-Fe oxide system. In each row, each one of the $6$ plots corresponds to the map of the concentration of a basis pattern (crystal phase). The size of each dot is proportional to the phase concentration estimated at that point from the coefficient matrix $H$. It can be seen that standard NMF overlooks the physical constraints, e.g. there are often mixtures of more than $3$ basis patterns and the phase regions are disconnected. CombiFD recovers accurately all the phases except for the last one. }
	\label{fig:allife}
\end{figure*}

We first present how to incorporate complex constraints capturing some of the key physical laws that govern the data-generating process for the phase map identification problem.

\textbf{Sparsity: }
According to the so-called Gibbs phase rule, under equilibrium conditions at fixed temperature and pressure, there can be mixtures of at most $M$ phases occurring at each data point  (chemical composition) in a library involving $M$ basic elements. 
This is encoded as $||H_s||_0 \leq M$, for $s \in [1,k]$ as in (\ref{cnmf_sparse_def}).

\textbf{Shifting: }
Each basis pattern may be slightly stretched by a different amount for each composition sample. Indeed, chemical alloying within a given compound may alter the crystal lattice constants, leading to a systematic shift (as a function of composition) of peak positions in the measured X-ray patterns. For isotropic lattice expansion and signals measured versus the scattering vector magnitude, the peak shifts are proportional to the peak positions, corresponding to a linear stretch of the signal. Therefore, we use $Qk$ basis vectors, where $k$ are free and $(Q-1)k$ are constrained to be shifted versions of the free basis vectors. This is encoded as follows: $A_{i,j} = interpolate (A_{\lfloor i / (1+\ell \gamma) \rfloor,zQ} ,A_{\lceil i / (1+\ell \gamma) \rceil,zQ} )$ for $j=z Q + \ell$, $z \in [0,k-1]$, $\ell \in [1,Q-1]$, where $\gamma$ is a constant for the shifting granularity and $interpolate(x,y)$ denotes linear interpolation between $x$ and $y$. By choosing Q to be no smaller than the ratio of the maximum peak shift and the minimum peak width (a known value for a given experiment), a linear sum of the Q mutually-constrained basis patterns can accurately model each instance of the shifting phase pattern. 

\textbf{Connectivity: }
The compositions at which a phase (or basis) is observed should form a connected region in composition space, 
and its lattice parameters should vary smoothly across the region.
Hence we build a graph $G=(V,E)$ with $n$ vertices (one vertex per data point) and edges between sample points that have similar compositions.
Let $G(H_\ell)$ be the subgraph induced by the vertex set $V(H_\ell)=\{i: H_{\ell,i}>0\} \subseteq V$ (the set of vertices where phase $\ell$ is used). We want to enforce $G(H_\ell)$ is connected for $\ell=1,\cdots,k$. The first formulation we consider is flow based.
Intuitively, this flow-based encoding defines a flow for each phase $\ell$ that can only pass through vertices where the phase is present. There is a source node that injects positive flow in the network, and there is some outgoing flow at every vertex where a phase is used. In order to satisfy flow conservation, there has to be a path from the source to any other node belonging to the same phase in the network. This constraint enforces connectivity but it can be expensive to include in the MIQP formulation when there is a large number of points. We therefore also consider the following variation. For all triples of points $v_1,v_2,v_3$ that lie on a straight line (in this order) in the composition space, and for every phase $\ell$, we enforce the constraint that $h_{\ell,v_2} \geq \min \{h_{\ell,v_1},h_{\ell,v_3}\}$. Although these constraints do not strictly enforce connectivity, they are simpler and often strong enough that the solution obtained is actually connected. 

We consider synthetic data from \cite{LeBras2014Computational}, generated from the Aluminium(Al)-Lithium(Li)-Iron(Fe) oxide system and for which the ground truth is known. This dataset has $219$ points and $6$ underlying phases, where each phase has up to $42$ diffraction peaks. We report in Fig. \ref{fig:allife} the data interpretation obtained using regular NMF, CombiFD and the ground truth. On average, the number of iterations of AMIQO is about $8$, with runtimes ranging from minutes to a few hours. We can see that although we cannot recover exactly the ground truth solution, our interpretation obtained considering prior domain knowledge is much more accurate. 
For example, using NMF the sparsity constraint is violated for many sample points.
In terms of evaluation metric we adapt the previous definition of accuracy to the case of soft cluster assignment, to reflect the fact that a sample point might be assigned to multiple clusters. Namely, we define $AC=1/n \cdot max_{\sigma} \sum_{i=1}^{k}|r_i \cap c_{\sigma_i}|/|r_i \cup c_{\sigma_i}|$
. Note that this metric does not explicitly reflect the amount of violated combinatorial constraints, which would be interesting to evaluate in future work. We report the results in Table \ref{tab:matdiscres}. Overall, CombiFD outperforms NMF, confirming the visual that incorporating prior knowledge indeed improves the accuracy of the interpretation. 
Compared with the previous constraint programming formulation (SMT)~\cite{ermon2012smt}, the CombiFD algorithm exhibits much better scalability, allowing us to quickly analyze more realistic sized datasets with hundreds of sample points in a few hours. In fact, SMT could not find solutions after 20 hours on all instances but one.   

Finally, we show results obtained with CombiFD on a real dataset (Fe-Bi-V oxide system, Fig. \ref{fig:febiv}). While the phase map is unknown for this system, Fig. 3 shows the excellent match between the phase $1$ basis pattern from the CombiFD solution and the pattern of the known Bi$_4$V$_2$O$_{11}$ phase~\cite{bergerhoff1987crystallographic}. Chemical alloying of Fe into this compound, of the form Bi$_4$V$_{2-x}$Fe$_x$O$_{11-x}$, has also been observed, which may be related to the identification of this phase 1 basis pattern over a wide composition range in the library~\cite{vannier2003bi}. The results of Fig. 3 demonstrate the ability of CombiFD to identify well-connected phase regions from experimental data, and further investigations are underway to evaluate the presence of minority phases with weak signals in this composition library.

\begin{figure*}[t]
\centering
		\includegraphics[scale=0.29]{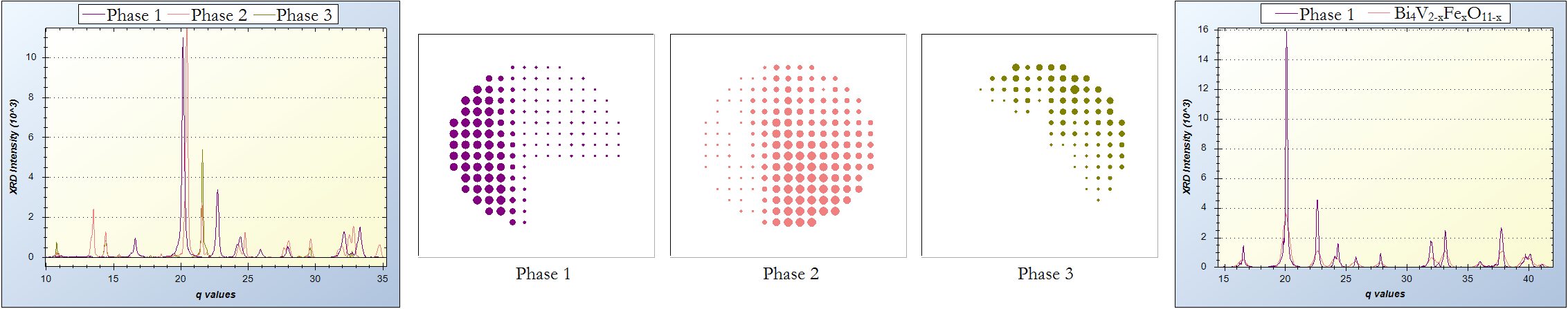}
	\caption{Results obtained with CombiFD and $3$ phases (k=3) on the Fe-Bi-V oxide system. The X-ray diffraction data was collected at the SLAC National Accelerator Laboratory. The plot on the left shows the patterns of the 3 phases for low values of q (10-30 nm$^{-1}$), while the center 3 diagrams indicate where each phase appears. The right plot shows how one (recovered) phase matches a phase from the literature. 
}
	\label{fig:febiv}
\end{figure*}

\begin{small}
\begin{table}[tb]
\centering
\small
\begin{tabular}{cc|ccc}
			\hline
			\\[-1ex]
\multicolumn{2}{c|}{Parameters} & \multicolumn{3}{c}{Accuracy}\\
 $n$ & $m$ & \textbf{NMF} & \textbf{CombiFD}  & \textbf{SMT}
\\[1pt]\hline
			\\[-1.5ex]
28&650&0.65&0.86& \textbf{0.96}\\
60&300&0.68&\textbf{0.77}& \emph{t.-o.}\\
60&650&0.66&\textbf{0.75}&\emph{t.-o.}\\
219&300&0.64&\textbf{0.73}&\emph{t.-o.}\\
219&650&0.64&\textbf{0.73}&\emph{t.-o.}\\
219&100&0.64&\textbf{0.75}&\emph{t.-o.}\\
219&200&0.63&\textbf{0.73}&\emph{t.-o.}\\

\hline
		\end{tabular}
\caption{Accuracy of NMF vs. CombiFD vs. SMT for different instances of the Al-Li-Fe oxide system. \emph{t.-o.} indicates a time-out after $20$ hours of CPU time.}
\label{tab:matdiscres}
\end{table}
\end{small}

\section{Conclusions}
\label{conclusions}
The ability to integrate complex prior knowledge into unsupervised data analysis approaches is a rich and challenging research problem, pervasive in a variety of domains, such as scientific discovery. In particular, the motivating application of this work is the discovery of new fuel cell and solar fuel materials, thereby addressing pressing issues in sustainability such as the need for renewable and clean energy.
We introduced CombiFD, a novel factor-based pattern decomposition framework that significantly generalizes and extends prior approaches. Our framework allows the specification of general constraints (including combinatorial ones), which are used to specify a-priori domain knowledge on the factorization that is being sought. These include traditional constraints such as non-negativity as well as more intricate dependencies, such as the ones coming from known phase behavior of chemical systems. We introduced a general factorization algorithm called AMIQO, based on solving a sequence of (mixed-integer) quadratic programs. We showed that AMIQO outperforms state-of-the-art approaches on a key problem in materials discovery: it scales to large real world datasets, it can handle complex, logically structured prior knowledge and by including prior knowledge into the model, we obtain significantly more accurate interpretations of the data.
There are many directions to further extend our work, namely concerning representation formalism to capture other combinatorial constraints, with good performance/runtime trade-offs, as well as other algorithms to solve the combinatorial optimization problem.

\fontsize{9.5pt}{10.5pt}

\section{Acknowledgments}
\label{acknowledgments}

Work supported by the National Science Foundation (NSF Expeditions in Computing award for Computational Sustainability, grant 0832782, NSF Inspire grant 1344201 and NSF Eager grant 1258330). Experiments were run on an infrastructure supported by the NSF Computing research infrastructure for Computational Sustainability grant 1059284. In collaboration with the Energy Materials Center at Cornell (emc2), an Energy Frontier Research Center funded by the U.S. Department of Energy (DOE), Office of Science, Office of Basic Energy Sciences (Award No. DE-SC0001086). Research conducted at the Cornell High Energy Synchrotron Source (CHESS) is supported by the NSF and the National Institutes of Health/National Institute of General Medical Sciences under NSF award DMR-0936384. Based upon work performed by the Joint Center for Artificial Photosynthesis (JCAP), a DOE Energy Innovation Hub, supported through the Office of Science of the U.S. Department of Energy (Award No. DE-SC0004993). Use of the Stanford Synchrotron Radiation Lightsource, SLAC National Accelerator Laboratory, is supported by the U.S. DOE, Office of Science, Office of Basic Energy Sciences under Contract No. DE-AC02-76SF00515.  

\appendix
\section{Appendix: Proofs}

\textbf{Proof of Proposition \ref{thProperties}}
The proof is by induction. Suppose $(W^j,H^j,x^j,b^j)$ is feasible for (\ref{cnmf_def}) (the base case $j=0$ holds by construction). 
It follows that (\ref{cnmf_def}) augmented with the additional constraint $H=H^j$ is still feasible, and therefore $\min_{W,x,b} f(W,H^{0})$ subject to (\ref{cnmf_def}) and $H=H^j$ admits an optimal solution $W^{j+1},\tilde{x}^{j+1},\tilde{b}^{j+1}$. Since $W^{j+1},H^j,\tilde{x}^{j+1},\tilde{b}^{j+1}$ is feasible for (\ref{cnmf_def}), it follows that (\ref{cnmf_def}) augmented with the additional constraint $W=W^{j+1}$ is also feasible. Therefore, $\min_{H,x,b} f(W^{j+1},H)$ subject to (\ref{cnmf_def}) and $W=W^{j+1}$ admits an optimal solution $H^{j+1},x^{j+1},b^{j+1}$. It also follows that $W^{j+1}, H^{j+1},x^{j+1},b^{j+1}$ is feasible for (\ref{cnmf_def}). Finally, since we are optimizing at every step, it follows that $|| A-W^jH^j||_{p} \geq || A-W^{j+1} H^j||_{p} \geq || A-W^{j+1}H^{j+1}||_{p}$.

\textbf{Proof of Theorem \ref{thKmeanEquiv}}
The initial (feasible) values of $H^0,b^0$ can be seen as an initial (hard) assignment of data points to clusters, where data point $i$ belongs to cluster $s$ if $b^0_{s,i}=1$.
The optimal solution for $\min_{W,b} f(W,H^{j})$ subject to (\ref{cnmf_def}) and $H=H^j$ is to choose each column $W$ to be the centroid of the data points assigned to the corresponding cluster by $b^j$ at iteration $j$, i.e. for each $s$ set the $s$-th column of $W$ to be $w_s = 1/(\sum_i b_{s,i}) \sum a_i b_{s,i}$ (non-negative because the data points are assumed to be non-negative).
An optimal solution for $\min_{H,b} f(W^{j+1},H)$ subject to (\ref{cnmf_def}) and $W=W^{j+1}$ can be found by assigning each data point $i$ to the cluster whose centroid $w_s$ is closest to data point $a_i$, i.e. setting $h_{s^\ast(i),i} = b_{s^\ast(i),i} = 1$ where $s^\ast(i)=\arg \min_s ||w_s - a_i||_2 $.
These operations exactly correspond to $k$-means clustering initialized with the hard cluster assignment given by $b^0$.

\begin{small}
\bibliographystyle{aaai}
\bibliography{bibs/MDbib,bibs/NMF,bibs/MDbib2}
\end{small}
%

\end{document}